\def\eqref#1{equation~\ref{#1}}
\def\1{\bm{1}}
\def\zerovec{\mathbf{0}}
\DeclareMathAlphabet{\mathsfit}{\encodingdefault}{\sfdefault}{m}{sl}
\SetMathAlphabet{\mathsfit}{bold}{\encodingdefault}{\sfdefault}{bx}{n}
\def\gL{{\mathcal{L}}}
\newcommand{\R}{\mathbb{R}}
\newcommand{\N}{\mathbb{N}}
\DeclareMathOperator*{\argmin}{arg\,min}
\newcommand{\norm}[1]{\left\Vert #1\right\Vert}
\newcommand{\paren}[1]{\left(#1\right)}
\newcommand{\brackets}[1]{\left[#1\right]}
\newcommand{\braces}[1]{\left\{#1\right\}}
\newcommand{\transpose}{^\mathsf{\scriptscriptstyle T}}
\newcommand{\inverse}{^{\scriptscriptstyle -1}}
\DeclareMathOperator{\Diag}{Diag}		
\def\loss{\gL}
\declaretheorem[name=Theorem]{thm}
\crefname{thm}{Theorem}{Theorems}
\crefname{lem}{Lemma}{Lemmas}
\crefname{prop}{Proposition}{Propositions}
\crefname{cor}{Corollary}{Corollaries}
\crefname{fact}{Fact}{Facts}
\crefname{defn}{Definition}{Definitions}
\crefname{assmpt}{Assumption}{Assumptions}
\crefname{remark}{Remark}{Remarks}
\begin{document}

\title{Gradient Descent on Logistic Regression: Do Large Step-Sizes Work with Data on the Sphere?}

\author{%
    \name Si Yi Meng \email sm2833@cornell.edu \\
    \addr Inria Paris\\
    Paris, France
    \AND
    \name Baptiste Goujaud \email baptiste.goujaud@inria.fr \\ 
    \addr Inria Saclay \\
    Palaiseau, France
    \AND
    \name Antonio Orvieto \email antonio@tue.ellis.eu \\
    \addr ELLIS Institute Tübingen\\
    MPI for Intelligent Systems\\
    Tübingen AI Center, Germany
    \AND
    \name Christopher De Sa \email cmd353@cornell.edu \\
    \addr Department of Computer Science\\
    Cornell University\\
    Ithaca, NY, USA
}

\editor{My editor}

\maketitle

\begin{abstract}
    Gradient descent (GD) on logistic regression has many 
    fascinating properties. 
    When the dataset is linearly separable, it is known that 
    the iterates converge in direction to the maximum-margin 
    separator regardless of how large the step size is. 
    In the non-separable case, however, it has been shown that GD can exhibit 
    a cycling behaviour even when the step sizes is still below the stability 
    threshold $2/\lambda$, where $\lambda$ is the largest eigenvalue 
    of the Hessian at the solution. 
    This short paper explores whether restricting the data to have 
    equal magnitude is a sufficient condition for global convergence, 
    under any step size below the stability threshold.
    We prove that this is true in a one dimensional space, 
    but in higher dimensions cycling behaviour can still occur. 
    We hope to inspire further studies on
    quantifying how common these cycles are in realistic datasets, 
    as well as finding sufficient conditions to guarantee 
    global convergence with large step sizes.
\end{abstract}

\section{Introduction}

Logistic regression is a fundamental problem in 
statistics and machine learning. In the age of deep learning and large 
language models, logistic regression remains valuable for its 
simplicity, interpretability, and efficiency, especially for 
tabular datasets with binary outcomes. Given a dataset consisting of 
$n$ examples $x_i\in\R^d$, each assigned a binary label $y_i=1$ or $-1$, 
we wish to find a linear separator $w^*\in\R^d$ such that
\begin{align}
	w^* \in \argmin_{w\in\R^d}\loss(w) \coloneqq \frac{1}{n}\sum_{i=1}^n \log \paren{ 1 + \exp( - y_i w\transpose x_i ) }.
\end{align}
Unlike linear least squares regression, logistic regression problems typically 
do not have closed-form solutions. One often resorts to first-order 
methods like Gradient Descent (GD), which iteratively performs the update
\begin{align}
	w_{t+1} = T(w_t), \qquad  T(w) \coloneqq w - \eta \nabla \loss(w)
\end{align}
where $\eta$ is the step size. GD is straightforward to implement and
robust to numerical issues (compared to second-order methods). 
Classical convex optimization theory tells us that if we use $\eta < 2/L$
where $L$ is the smoothness constant (global upper bound on the Hessian),
the function value is guaranteed to decrease monotonically 
and thus GD converges to $w^*$ (see, e.g., \citep{nesterov2018lectures}).
For the specific case of logistic regression, 
the step sizes under which global convergence 
is guaranteed can sometimes be much larger, allowing a faster convergence 
\citep{wu2023implicit,wu2024large}.

When the dataset is linearly separable, there can be 
an infinite number of hyperplanes that perfectly classify 
that dataset, and those with infinite magnitude are ones that
minimize the objective. 
In this case, GD enjoys many nice properties. 
Notably, the directions of $\braces{w_t}_t$ always converge 
to the maximum-margin
separating hyperplane, a property known as the 
implicit bias of GD \citep{soudry2018implicit}.
More importantly, this directional convergence 
and implicit bias are still guaranteed under
\emph{arbitrarily} large step sizes \citep{wu2023implicit}.
While large step sizes may (initially) prevent a monotonic decrease 
in function value, they actually yield faster convergence in the 
asymptotic regime \citep{wu2024large}.

The situation changes in the non-separable setting: 
the minimizer is unique and finite. 
It shouldn't come as a surprise 
that we can no longer choose arbitrarily-large step sizes. 
In this case, the necessary condition for convergence is 
$\eta < 2/\lambda$, where $\lambda$ is the largest eigenvalue 
of the Hessian at the minimizer. This is also known as the 
stability threshold if we view GD on the logistic regression objective 
as a discrete-time nonlinear map. Above this threshold, GD will 
diverge, but below it, convergence is still possible
when initialized sufficiently close to the minimizer.
The nice properties in the separable case raise the question: 
even in the non-separable case, when we can't choose arbitrarily-large 
step sizes, can we use any step size up to the stability threshold?
That is, if we take step sizes of the form 
\begin{align}
	\label{eq:step-size}
	\eta =  \frac{\gamma}{\lambda} \qquad \text{where} \qquad 
	\lambda \coloneqq \lambda_{\max}(\nabla^2\loss(w^*)),
\end{align}
can we still converge to $w^*$ globally using any $\gamma<2$?
As it turns out, the answer is no. 
\citet{meng2024gradient} showed that even when $d=1$,
global convergence is only guaranteed for $\gamma \leq 1$. 
For all $\gamma \in(1,2)$, there exists counterexamples such that
GD can converge to a non-trivial cycle for certain initializations.
In higher dimensions, the situation is worse in that 
this is true for all $\gamma < 2$. This result is perhaps not so surprising, 
as local stability does not, in general, imply global stability---thus 
rendering the favorable results in the separable setting 
a special case. What's particularly interesting are the 
counterexamples themselves.

Their proof technique involves constructing bad datasets 
in the following way: for a fixed $\gamma$,
start with a base dataset consisting of only 
examples on the unit sphere such that GD with $\eta$ of the form 
\labelcref{eq:step-size} on this dataset follows a controlled 
trajectory. To this dataset, a few examples 
of large magnitude with specifically designed directions are added.
These very large examples function as outliers, contributing
to the gradient only when the iterates land 
in certain regions, thereby kicking the trajectory into a cycle.
However, such counterexamples are unrepresentative 
of datasets encountered in practice. Could this 
problematic behaviour 
be avoided if such large examples are ruled out?
If so, it means that under such data regularity conditions, 
GD on non-separable logistic regression can also converge globally for 
very large step sizes, potentially much larger than the pessimistic 
$2/L$ bound, all the way up to the $2/\lambda$ stability threshold.

\begin{minipage}{0.62\textwidth}
	This leads to the formal question:
	\emph{is global convergence guaranteed for all $\gamma<2$ 
	when all examples are restricted to the unit sphere, 
	i.e., $\norm{x_i}=1$}?
	\cref{tab:gas-summary} presents our results. In summary, 
	we show that in 1D, we indeed have global convergence for all $\gamma<2$
	when we restrict examples to be on the unit sphere. In particular, 
	convergence is oscillatory when $\gamma\in(1,2)$.
	In higher dimensions, we construct a counterexample to 
	demonstrate the lack of such
	guarantees even when the data is similarly restricted.
\end{minipage}
\hfill
\begin{minipage}{0.35\textwidth}
	\captionof{table}{Conditions under which global convergence is guaranteed 
	for GD on non-separable logistic regression with $\eta = \gamma / \lambda$
	\labelcref{eq:step-size}.}
	\begin{center}
		\begin{tabular}{@{}lll@{}}
		\toprule
				& Arbitrary $x_i$ & $\norm{x_i} = 1$  \\ \midrule
		$d=1$ & $\gamma\leq 1$ & $\gamma < 2$      \\
		$d>1$ & \multicolumn{2}{c}{Counterexamples exist}   \\ \bottomrule
		\end{tabular}
	\end{center}
	\label{tab:gas-summary}
\end{minipage} 

\subsection{Bounded data is not sufficient}
The normalization of $\norm{x_i}=1$ is not a common preprocessing step in 
logistic regression, especially with tabular data where it's most 
often applied. Before moving on, we want to emphasize the reason 
for studying this arguably stringent and unrealistic condition.
Wouldn't it be sufficient to simply restrict all examples to be bounded
\emph{within} the sphere, i.e., $\norm{x_i}\leq 1$, rather than \emph{on} the sphere? The former 
requires only uniformly scaling all examples by the magnitude of the largest,
while the latter completely alters the geometry of the problem.
A simple argument shows that any GD trajectory on a dataset
can be preserved under uniform scaling for the same $\gamma$.

\begin{restatable}{fact}{gdpreservedunderscaling}
	\label{prop:gd-preserved-under-scaling}
	Let $X$ be the original dataset and all 
	labels be $1$. Denote the corresponding logistic regression objective 
	by $\loss$, its solution by $w^*$, and 
	$\lambda \coloneqq \lambda_{\max} ( \nabla^2\loss(w^*) )$. 
	Let $\braces{w_t}$ be any trajectory 
	generated by GD on $\loss$ using the step size $\eta = \gamma / \lambda$
	for some $\gamma>0$. 

	Now suppose we scale the dataset by some $c>0$, giving us 
	$\hat X = cX$. The corresponding objective is denoted by $\hat \loss$, 
	its solution by $\hat w^*$, 
	and $\hat\lambda \coloneqq \lambda_{\max} ( \nabla^2 \hat\loss(\hat w^*))$.
	Then, $\braces{\hat w_t} = \braces{w_t / c}$ is a valid trajectory 
	that can be generated by GD on $\hat \loss$ with the 
	step size $\hat\eta = \gamma / \hat \lambda$.
\end{restatable}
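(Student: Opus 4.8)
The plan is to reduce everything to a single algebraic identity relating the two objectives, and then verify that the claimed sequence satisfies the GD recursion by induction. First I would observe that, since all labels equal $1$, scaling the data by $c$ while scaling the weight by $1/c$ leaves every inner product invariant: writing $\hat x_i = c x_i$, we have $\hat w\transpose \hat x_i = (c\hat w)\transpose x_i$, and hence
\[
	\hat\loss(\hat w) = \frac{1}{n}\sum_{i=1}^n \log\paren{1 + \exp\paren{-(c\hat w)\transpose x_i}} = \loss(c\hat w).
\]
This identity $\hat\loss = \loss \circ (c\,\cdot\,)$ is really the only substantive content of the claim; everything else follows from the chain rule.

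Next I would record the three consequences I need. Differentiating the identity gives $\nabla\hat\loss(\hat w) = c\,\nabla\loss(c\hat w)$ and $\nabla^2\hat\loss(\hat w) = c^2\,\nabla^2\loss(c\hat w)$. Since $\hat\loss(\hat w) = \loss(c\hat w)$ is minimized exactly when $c\hat w = w^*$, the minimizer transforms as $\hat w^* = w^*/c$. Evaluating the Hessian identity there yields $\nabla^2\hat\loss(\hat w^*) = c^2\,\nabla^2\loss(w^*)$, so the top eigenvalues satisfy $\hat\lambda = c^2\lambda$ and therefore the rescaled step size is $\hat\eta = \gamma/\hat\lambda = \eta/c^2$.

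Finally I would close the argument by induction on $t$. The base case $\hat w_0 = w_0/c$ holds by choice of initialization. For the inductive step, assuming $\hat w_t = w_t/c$ (so that $c\hat w_t = w_t$), I substitute into the GD update on $\hat\loss$ and use the gradient scaling $\nabla\hat\loss(\hat w_t) = c\,\nabla\loss(c\hat w_t) = c\,\nabla\loss(w_t)$:
\[
	\hat w_{t+1} = \hat w_t - \hat\eta\,\nabla\hat\loss(\hat w_t) = \frac{w_t}{c} - \frac{\eta}{c^2}\cdot c\,\nabla\loss(w_t) = \frac{1}{c}\paren{w_t - \eta\,\nabla\loss(w_t)} = \frac{w_{t+1}}{c},
\]
which is exactly the claim.

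I do not expect a genuine obstacle here, since GD is a deterministic map and the statement is essentially a change of variables. The only thing to be careful about is tracking the powers of $c$: the gradient contributes one factor of $c$, while the step-size rescaling removes two, leaving a net $1/c$ that propagates the relation $\hat w_t = w_t/c$ from one iterate to the next. The phrasing ``any trajectory'' is handled automatically, because fixing $w_0$ determines $\braces{w_t}$ uniquely, and the induction shows $\braces{w_t/c}$ is precisely the GD trajectory on $\hat\loss$ initialized at $w_0/c$.
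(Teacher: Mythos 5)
Your proposal is correct and follows essentially the same route as the paper: both derive $\nabla\hat\loss(\hat w) = c\,\nabla\loss(c\hat w)$, conclude $\hat w^* = w^*/c$ and $\hat\lambda = c^2\lambda$, and then verify that dividing the GD update by $c$ reproduces the scaled trajectory. Your explicit framing via the identity $\hat\loss = \loss\circ(c\,\cdot\,)$ and the induction on $t$ is just a slightly more formal packaging of the paper's direct computation.
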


A short proof can be found in \cref{sec:missing-proofs}.
This result shows that if GD can converge to a cycle on a base dataset 
for some $\gamma$, it will also converge to a cycle on a scaled version 
of the dataset for the same $\gamma$. Therefore, restricting 
examples to lie within the sphere is not sufficient, as the counterexamples 
simply transfer. 

\section{Data on the sphere: global convergence in one dimension}
\label{sec:1d-convergence}
Previously, \citet{meng2024gradient} have shown that when $d=1$, 
global convergence is guaranteed for all $\gamma$ up to $1$. 
In this section, we show that if we restrict examples to be on the unit sphere,
that is, $x_i=1$ or $x_i=-1$, then we can push this threshold up to $\gamma<2$.

\begin{thm}
	\label{thm:1d-convergence-sphere}
	Let $d=1$ and $\norm{x_i}=1$ for all $i=1,\dots,n$.
	Then GD iterates converge to $w^*$ for any initialization 
	under the step size $\eta$ of the form \labelcref{eq:step-size},
	as long as $\gamma < 2$. 
\end{thm}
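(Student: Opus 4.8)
The plan is to reduce everything to a one-dimensional map and then exclude nontrivial recurrence. First I would simplify the objective. In dimension one $\norm{x_i}=1$ forces $x_i\in\{+1,-1\}$, so each summand depends on $(x_i,y_i)$ only through $z_i \coloneqq y_i x_i\in\{+1,-1\}$. Writing $p$ for the number of indices with $z_i=1$ and $q=n-p$ for the rest (with $1\le p\le n-1$ in the non-separable regime, so that $w^*$ is finite), a direct computation gives $\loss'(w)=\sigma(w)-p/n$ where $\sigma(z)=1/(1+e^{-z})$. Hence $w^*=\sigma^{-1}(p/n)=\log(p/q)$ and $\loss''(w)=\sigma(w)(1-\sigma(w))$, so $\lambda=\loss''(w^*)=pq/n^2$. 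The GD map becomes $T(w)=w-\eta\paren{\sigma(w)-p/n}$ with $\eta=\gamma/\lambda$, and $T'(w^*)=1-\gamma$, confirming local stability, and damped oscillation once $\gamma>1$, for all $\gamma<2$.

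Next I would set up a global framework on a compact interval. Since $\abs{\loss'(w)}<\max(p,q)/n$ is uniformly bounded and $\loss'$ is strictly increasing with unique zero $w^*$, the interval $I=[\,w^*-\eta\max(p,q)/n,\; w^*+\eta\max(p,q)/n\,]$ satisfies $T(I)\subseteq I$ and absorbs every orbit in finitely many steps: for $w>w^*$ one has $w^*-\eta q/n< T(w)<w$, and symmetrically on the left, which simultaneously prevents escape to infinity and overshoot beyond $I$. With a compact invariant interval and a unique fixed point in hand, global convergence follows from a classical fact about continuous interval maps (Coppel): a continuous self-map of a compact interval with no periodic orbit of period two has every orbit converging to a fixed point. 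It therefore remains only to exclude $2$-cycles, which is the heart of the argument.

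Suppose $a<b$ form a $2$-cycle, $T(a)=b$ and $T(b)=a$. Adding and subtracting the two defining equations yields $\sigma(a)+\sigma(b)=2\sigma(w^*)$ and $\frac{\sigma(b)-\sigma(a)}{b-a}=\frac{2}{\eta}=\frac{2\lambda}{\gamma}$. Writing $s_0\coloneqq\sigma(w^*)=p/n$ and parametrizing $\sigma(a)=s_0-t$, $\sigma(b)=s_0+t$ with $t>0$, the secant slope equals $2t\big/\paren{\sigma^{-1}(s_0+t)-\sigma^{-1}(s_0-t)}$. The key observation is that the logit $\sigma^{-1}$ has derivative $(\sigma^{-1})'(s)=\tfrac1s+\tfrac1{1-s}$, which is \emph{strictly convex} on $(0,1)$; hence by Hermite--Hadamard (Jensen) the symmetric average $\frac{1}{2t}\int_{s_0-t}^{s_0+t}(\sigma^{-1})'(s)\,ds$ strictly exceeds $(\sigma^{-1})'(s_0)=1/(s_0(1-s_0))=1/\lambda$. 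Consequently the secant slope is strictly less than $\lambda$ for every $t>0$, whereas a $2$-cycle forces it to equal $2\lambda/\gamma>\lambda$ when $\gamma<2$ --- a contradiction. This rules out $2$-cycles and, via the interval-map theorem, establishes convergence to $w^*$ from every initialization.

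I expect the main obstacle to be exactly this exclusion of $2$-cycles: the convexity of the logit's derivative is what converts the stability-threshold hypothesis $\gamma<2$ into the clean inequality ``secant slope $<\lambda$,'' and it is special to data on the sphere. For general $\norm{x_i}$ the reduced gradient is a sum of differently scaled sigmoids, the constraint $\sigma(a)+\sigma(b)=2\sigma(w^*)$ no longer holds, and the Jensen step fails --- consistent with the known $\gamma>1$ counterexamples. A secondary point requiring care is the trapping step, where one must verify that orbits neither diverge nor jump outside $I$, and the correct invocation of the interval-map convergence result, checking that uniqueness of the fixed point together with absence of period-two orbits are precisely its hypotheses.
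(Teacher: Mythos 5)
Your proof is correct, and it takes a genuinely different route from the paper's. The paper argues dynamically: a one-step contraction lemma for all $w\ge w^*$, a ``double crossing'' lemma showing a right-to-left step must be followed by a left-to-right step, and an explicit two-step contraction factor $1-\gamma(2-\gamma)R(w_t)^2/\sigma'(w^*)^2$, which has the added benefit of yielding the linear (two-step) rate quoted after the theorem. You instead build a compact absorbing interval, invoke Coppel's theorem (convergence of all orbits of a continuous interval self-map is equivalent to the absence of period-two points), and kill $2$-cycles by a clean structural observation: the cycle conditions force $\sigma(a)+\sigma(b)=2\sigma(w^*)$ and secant slope $2\lambda/\gamma>\lambda$, while strict convexity of $(\sigma^{-1})'(s)=1/(s(1-s))$ plus Jensen forces every such secant slope to be $<\lambda$. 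I checked the identities ($\lambda=pq/n^2$, the sum/difference equations, the Hermite--Hadamard step) and the trapping argument (from $w>w^*$ one always has $T(w)>w^*-\eta q/n$, so orbits enter $I$ and cannot overshoot it); all are sound. The two approaches hinge on closely related inequalities --- your Jensen bound at points balanced in $\sigma$-value versus the paper's bound $\sigma'(w^*)/R(w^*-\epsilon)+\sigma'(w^*)/R(w^*+\epsilon)>2$ at points symmetric in $w$ --- but yours is shorter, isolates exactly why equal-magnitude data is special (a single sigmoid, so the balance condition and the convexity of the logit derivative apply), and correctly predicts the failure mode for general data. What it gives up is quantitative content: Coppel's theorem is a qualitative black box, so your argument alone does not recover the explicit convergence rate the paper extracts from its two-step contraction lemma.
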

\begin{proof}
	We only need to consider $\gamma \in (1,2)$, as the $\gamma \leq 1$ case 
	is covered by \citet[Theorem 1]{meng2024gradient}. In one-dimension, we 
	can simplify the objective significantly. Without loss of generality, 
	assume all examples have $y_i=1$.
	Suppose we have $m$ copies of $x_i=1$ and 
	$n$ copies of $x_i=-1$, and that $m = cn$ for some $c\geq 1$, so that 
	$w^*\geq 0$ without loss of generality. 
	If $c=1$, then $w^*=0$, the result trivially 
	holds because then $\lambda = L$ and so convergence follows as 
	$\eta < 2/L$. Let $\sigma$ denote the sigmoid function. 
	The gradient can be simplified as
	\begin{align}
		(n+m)\loss'(w) &= n\sigma(w) - m \sigma(-w) \nonumber \\
		&= n\sigma(w) - m(1-\sigma(w)) \nonumber \\
		&= (n+m) \sigma(w) - m, \nonumber \\
		\loss'(w) &= \sigma(w) - \frac{m}{n+m} = \sigma(w) - \frac{c}{c+1}.
	\end{align}
	At $w^*$, we have $\nabla\loss(w^*) = 0$, which gives us
	\begin{align}
		\sigma(w^*) = \frac{c}{c+1} \qquad \implies \qquad w^* = \sigma\inverse\paren{\frac{c}{c+1}} = \ln(c).
	\end{align}
	The Hessian at the solution is simply $\lambda = \loss''(w^*) = \sigma'(w^*)$.
	Together, the GD update is just
	\begin{align}
		T(w) = w - \eta \loss'(w) = w - \frac{\gamma}{\sigma'(w^*)} (\sigma(w) - \sigma(w^*)).
	\end{align}

	\begin{minipage}{0.59\textwidth}
		Let's analyze the shape of $T$ using its derivative
		$T'(w) = 1 - \gamma \frac{ \sigma'(w) }{ \sigma'(w^*) }$.
		Since $\gamma>1$, $T$ has two stationary points on either side of $0$, 
		as $\sigma'$ is symmetric about $0$. 
		The stationary point of $T$ to the right of $w^*$ 
		(call it $w_r$) satisfies $\sigma'(w_r) = \sigma'(w^*)/\gamma$,
		and the other stationary point is at $w_\ell = -w_r$.
		By continuity of $T$, the limits 
		$\lim_{w\to-\infty} T(w) = -\infty$ 
		and $\lim_{w\to\infty} T(w) = \infty$, and monotonicity of $\loss'$, 
		the point $w_\ell$ must be a local maximum while $w_r$ is a local minimum 
		of $T$.
		Since $T(w^*) = w^*$, it must be that $T$ decreases from $w_\ell$ 
		to $w_r$, passing through $w^*$ exactly once at $w^*$. 
		This means that 
		in a neighbourhood of $w^*$, \emph{if we were to converge, we must oscillate 
		between the two sides of $w^*$ as we converge.} 	
	\end{minipage}
	\hfill
	\begin{minipage}{0.38\textwidth}
		\begin{center}
			\includegraphics[width=\textwidth]{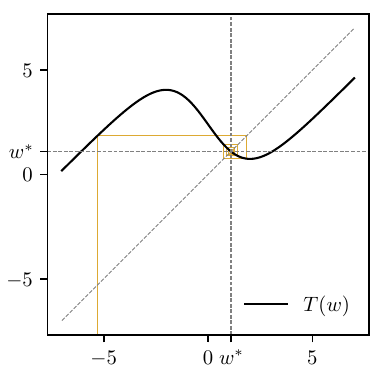}	
		\end{center}
		\captionof{figure}{ Cobweb diagram of $T$ for $c=3$ and $\gamma=1.8$. }
		\label{fig:1d-map-illustration}
	\end{minipage}	

	The formalization of this oscillatory convergence argument
	relies on three key steps, which we state here and prove in 
	\cref{sec:missing-proofs}. 
	\begin{restatable}[One-step contraction from right to left]{lem}{onestepcontraction}
		\label{lemma:contraction-from-right}
		For all $\gamma\in(1,2)$ and $c > 1$, taking one step at any $w\in[w^*, \infty)$ leads 
		to one-step contraction. That is,
		\begin{align*}
			|T(w) - w^*| < |w - w^*|.
		\end{align*}
	\end{restatable}

	\cref{lemma:contraction-from-right} states that if we take a step on the right 
	of $w^*$, we will always end up closer to $w^*$ than where we started, 
	regardless of whether we have crossed over or not. 
	However, 
	starting at some $w<w^*$, we may not have contraction in 
	one step, as $T(w)$ can cross over to other side of $w^*$ and be
	much farther to $w^*$ than where it started. 
	As it turns out, this is fine. The next two lemmas will show that 
	if we were to start 
	from the right and land on the left of $w^*$, 
	then taking another step there must take us
	back to the right of $w^*$
	but this time closer to $w^*$. Hence, contraction is guaranteed 
	in not one but two steps.

	\begin{restatable}[Double crossing]{lem}{doublecrossing}
		\label{lemma:double-crossing-rlr}
		For all $\gamma\in(1,2)$ and $c > 1$, if $w_t > w^*$ and $w_{t+1} < w^*$, 
		then on the second step we have $w_{t+2} > w^*$.
	\end{restatable}

	To show two-step contraction, we need to define the following quantity.	
	For an arbitrary $w\neq w^*$, let the average of $\sigma'$ 
	over $(w, w^*)$ be denoted as
	\begin{align}
		R(w) \coloneqq \frac{\sigma(w) - \sigma(w^*)}{ w - w^*}.
	\end{align}
	\begin{restatable}[Two-step contraction from right to left]{lem}{twostepcontraction}
		\label{lemma:two-step-contraction-from-right}
		For all $\gamma\in(1,2)$ and $c > 1$, if $w_t > w^*$ and $w_{t+1} < w^*$,
		then on the second step we have 
		\begin{align}
			\frac{w_{t+2} - w^*}{w_t - w^*} \leq 1-\gamma(2-\gamma) \frac{R(w_t)^2}{\sigma'(w^*)^2} < 1.
		\end{align}
	\end{restatable}
	So if we start at some $w_t>w^*$ and happen to land at $w_{t+1} < w^*$, then 
	another step must take us back to $w_{t+2} > w^*$. 
	If we didn't cross at $w_{t+1}$, then we would keep moving towards 
	$w^*$ and eventually cross over as we enter the oscillation neighbourhood. 
	If $w_t < w^*$, then we would either move towards $w^*$ from the left 
	and at some point cross over to the right of $w^*$, at which point 
	we can reset the counter and the argument starting at $w_t>w^*$ 
	would follow. Since we can guarantee a two-step contraction 
	from the right, we have the overall global convergence desired.
\end{proof}

\paragraph{Rate of convergence}
Suppose we $w_t>w^*$ is already in the oscillatory neighbourhood, which takes a 
finite number of iterations to enter.
Let $\tilde w > w^*$ be the point that $T(\tilde w) = w^*$.
Then for all $w\in(w^*, \tilde w)$, $T(w) < w^*$. 
In addition, $\tilde w - w^* = \eta (\sigma(\tilde w) - \sigma(w^*))$,
and so $R(\tilde w) = 1/\eta = \sigma'(w^*) / \gamma$.
By \cref{lemma:two-step-contraction-from-right},
\begin{align*}
	\frac{w_{t+2} - w^*}{w_t - w^*} &< 1- \gamma(2-\gamma) \frac{R(w_t)^2}{\sigma'(w^*)^2} \\
	&\leq 1 - \gamma(2-\gamma) \frac{ R(\tilde w)^2 }{ \sigma'(w^*)^2 }.
\end{align*}
Recursing in two-steps gives us for all $k\in\N$, 
\begin{align}
	w_{t+2k} - w^* &< \paren{ 1 - \gamma(2-\gamma)\frac{R(\tilde w)^2}{\sigma'(w^*)^2}  }^{(2k)/2} (w_t - w^*) \nonumber \\
	&= \paren{1-\frac{2-\gamma}{\gamma}}^{k}(w_t-w^*).
\end{align}
This (loose) estimation of the rate shows that as $\gamma\to 2$, 
$(1-(2-\gamma)/\gamma)\to 1$, so convergence becomes slower. 
The slowdown occurs because we are approaching the bifurcation point 
where the stable $2$-cycle emerges.

We have thus shown that in 1D, restricting the examples to have equal magnitude 
allows us to extend the global convergence step size multiplier 
to all $\gamma<2$, rather than $\gamma<1$ when no restrictions are imposed. 
The rate is linear (in two-steps) once we enter the contraction neighbourhood.
Unfortunately, as we show next, global convergence is still not guaranteed 
for all $d>1$ and any $\gamma<2$ even with the spherical data assumption.



\section{Data on the sphere: counterexample in some higher dimension}

\begin{restatable}{thm}{highdspherecounterexample}
	Let $\norm{x_i} = 1$ for all $i=1,\dots,n$. For any $\gamma < 2$, 
	there exists a non-separable logistic regression problem 
	in some dimension $d>1$, on which there exists a GD trajectory 
	under the step size $\eta = \gamma / \lambda$ that converges to 
	a cycle of period $k>1$.
\end{restatable}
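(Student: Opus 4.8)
The plan is to first understand why the one-dimensional argument cannot be recycled, since that pinpoints exactly what the higher-dimensional construction must supply. The proof of \cref{thm:1d-convergence-sphere} hinges on the fact that with $\norm{x_i}=1$ the reduced force is a single sigmoid $\loss'(w)=\sigma(w)-\sigma(w^*)$, whose derivative $\sigma'$ is maximized at the argument $0$. More generally, suppose one tried to keep the dynamics essentially one-dimensional by choosing a symmetric spherical dataset for which a line through $w^*$ along the top Hessian eigenvector is invariant under $T$. By symmetry the reduced map would be $T(t)=t-(\gamma/\lambda)f(t)$, where $f$ is a nonnegatively-weighted sum of \emph{centered} sigmoids $\sigma(a_j t)$ with effective magnitudes $a_j\le 1$ (the cosines of the angles of the examples to the line); then $f'(t)=\sum_j c_j a_j \sigma'(a_j t)$ is again maximized at $t=0$, so $f'(t)\le f'(0)=\lambda$ and $T'(t)\in(1-\gamma,1]\subset(-1,1]$ everywhere. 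Thus $T$ would be a global contraction toward $w^*$ and no cycle could form; moreover \cref{prop:gd-preserved-under-scaling} shows that rescaling cannot help. The conclusion is structural: to create a cycle while keeping $w^*$ locally attracting (as it must be when $\gamma<2$), the effective force has to be \emph{steeper away from $w^*$ than at $w^*$}, i.e. the relevant sigmoids must be \emph{shifted} off the origin --- something a single direction of unit data cannot produce, but extra dimensions can.

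Guided by this, I would build the dataset from two groups of unit vectors. An \emph{anchor} group lies near a direction $z$ and is imbalanced (more mass on one side), so that, exactly as in the $c>1$ analysis inside \cref{thm:1d-convergence-sphere}, the minimizer has a nonzero anchor coordinate $w^*_z\neq 0$ and the $z$-dynamics is strongly contracting. A \emph{cycling} group consists of unit vectors tilted at various angles $\psi$ to the $z$-axis within the plane spanned by $z$ and a second direction $u$. The point is that such an example contributes, as a function of the cycling coordinate $w_u$, a sigmoid $\sigma(\cos\psi\, w_z+\sin\psi\, w_u)$ whose inflection sits at $w_u=-\cot\psi\, w_z$; evaluated along the slaved value $w_z\approx w^*_z\neq 0$ this is a sigmoid \emph{shifted off the origin} by an amount controlled through $\psi$ and $w^*_z$, with slope $\sin\psi$. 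Superposing several such shifted sigmoids (varying $\psi$, and placing points on both sides of the $z$-axis for both signs of shift) lets me synthesize an effective cycling force $f$ whose slope is small at $w_u=0$ but large over a band away from the origin, while every example has unit norm. In effect this reproduces, purely with spherical data, the sharp ``switch'' that the large-magnitude outliers provided in \citet{meng2024gradient}.

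With the force shaped, the plan is to reduce to a scalar map and certify a stable cycle. By weighting the anchor group heavily, the $z$-coordinate contracts fast and can be \emph{slaved} to $w^*_z$ (formally via an invariant-manifold / adiabatic-elimination estimate, or a direct contraction bound on the $z$-update), so that the $u$-dynamics becomes a small, controlled perturbation of the scalar map $T_u(w_u)=w_u-(\gamma/\lambda)f(w_u)$. I would choose the synthesized $f$ so that $T_u$ is contracting near $0$ (consistent with $w^*$ being locally attracting for $\gamma<2$), yet, thanks to the off-origin band where $f'$ exceeds $\lambda$, the map overshoots enough to carry a hyperbolic, attracting orbit of some period $k>1$; any initialization in its basin then yields the claimed trajectory. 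Because the orbit is hyperbolic it is structurally stable, so it persists for the genuine coupled map once the anchor perturbation is made small enough, which settles the fixed $\gamma$.

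The main obstacle is the simultaneous bookkeeping. I must (i) guarantee that unit vectors alone can make the off-origin slope of $f$ large enough to sustain a stable cycle for \emph{every} $\gamma<2$ --- including $\gamma$ near $2$, where the cycle emerging from the period-doubling bifurcation is shallow and the tuning is delicate; (ii) keep the identification $\lambda=\lambda_{\max}(\nabla^2\loss(w^*))$ and $\eta=\gamma/\lambda$ exactly consistent with the effective multiplier seen by the cycling coordinate, since strengthening the anchor (to justify slaving) tends to push the top eigenvalue into the anchor direction and shrink that multiplier; and (iii) control the $z$--$u$ coupling rigorously rather than heuristically. Balancing the competing demands of a steep, off-origin force (to create the cycle) against a fast anchor and correct eigenvalue accounting (to preserve the prescribed step size) is the crux; I expect to resolve it by routing the existence of the cycle through structural stability of a hyperbolic orbit, so that only qualitative, perturbation-robust features of the synthesized map need to be checked.
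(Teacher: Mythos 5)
Your proposal is a research plan rather than a proof, and the gaps you yourself flag at the end are precisely the parts that carry all the difficulty. You never exhibit a dataset: the existence of a synthesized force $f$ whose slope at the origin stays below $\lambda$ while its slope on an off-origin band exceeds $\lambda$ by enough to sustain a stable period-$k$ orbit, \emph{for every} $\gamma<2$, is asserted as a target rather than established. The quantitative tension is real --- a unit vector tilted by $\psi$ contributes a sigmoid in the cycling coordinate with slope at most $\sin\psi\,\sigma'(\cdot)$ and shift $-\cot\psi\,w^*_z$, so large shifts force small slopes, and it is not clear the superposition can be made steep enough off-origin without also inflating the curvature at $w^*$ (and hence $\lambda$ itself, which feeds back into the step size). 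Likewise, the slaving of the anchor coordinate and the requirement that $\lambda_{\max}(\nabla^2\loss(w^*))$ not migrate into the anchor direction pull in opposite directions, and invoking ``structural stability of a hyperbolic orbit'' presupposes you have already produced a hyperbolic orbit to perturb. As written, the argument would not survive refereeing.

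The paper's proof avoids all of this by not constructing the cycle from scratch. It takes the known two-dimensional counterexample of \citet{meng2024gradient} (arbitrary norms, any $\gamma<2$), scales it into the unit ball --- which preserves the cycle by \cref{prop:gd-preserved-under-scaling} --- and then pads each example $x_i$ with components $\pm s_ie_j$, $s_i=\sqrt{1-\norm{x_i}^2}$, across $d-2$ extra coordinates so that every padded example has norm exactly $1$. The $\pm$ symmetry cancels the padding's contribution to the gradient on the subspace $\{w : w_{3:d}=0\}$, so that subspace is invariant and the dynamics there is \emph{identical} to the base problem; the Hessian at $w^*$ is block diagonal with extra eigenvalue $c_b/(d-2)$, which is pushed below $\lambda_b$ by taking $d$ large, so $\lambda=\lambda_b$ and the step size is unchanged. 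Your approach, if it could be completed, would be genuinely stronger --- it aims at a low-dimensional counterexample, which the paper explicitly leaves open --- but in its current form it does not constitute a proof of the stated theorem.
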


\begin{proof}
Theorem 3 in \citet{meng2024gradient} showed that without any data restrictions, 
for any $\gamma < 2$, one can construct a $2$-dimensional counterexample 
logistic regression problem on which a cycle exists. 
Let this base dataset 
be denoted as $X_b\in\R^{n_b\times 2}$ where $n_b$ is the number of examples. 
Assume that each $x_i$ in this dataset is already scaled to have norm 
at most $1$, which can be achieved by scaling all examples by 
the norm of the largest. 
By \cref{prop:gd-preserved-under-scaling}, the cycle on this 
base dataset is preserved for the same $\gamma$. 

Our counterexample in higher dimension will be constructed by ``schmearing''
this base dataset onto a high-dimensional unit sphere, while preserving the 
cycle in the $2$-dimensional subspace. 
The objective on this base dataset is denoted as 
\begin{align}
	\loss_b(w) = \frac{1}{n_b} \sum_{i=1}^{n_b} \ell( -w\transpose x_i ),
\end{align}
where $w^*_b$ is the solution, and $\ell(\cdot) = \ln(1+\exp(\cdot))$ denotes 
the logistic loss. We will use $\lambda_b$ to denote 
$\lambda_{\max}(\nabla^2\loss(w_b^*))$. 
All labels in the base dataset 
(and the subsequent construction) are $1$'s, so we omit them.
Let \smash{$s_i \coloneqq \sqrt{1-\norm{x_i}\!{}^2}$} for $i=1,\dots,n_b$.

Now for some $d>1$, we will construct a $d$-dimensional dataset consisting of 
$n=2(d-2)n_b$ samples. First, we duplicate the base dataset (of $n_b$ samples)
$2(d-2)$ times. Then, we pad each one of them as follows:
\begin{align}
	x_i \to 
	\braces{ \begin{pmatrix}
		x_i \\ 
		s_i e_1
	\end{pmatrix}, \dots ,
	\begin{pmatrix}
		x_i \\ 
		s_i e_{d-2}
	\end{pmatrix},
	\begin{pmatrix}
		x_i \\ 
		-s_i e_1
	\end{pmatrix}, \dots ,
	\begin{pmatrix}
		x_i \\ 
		-s_i e_{d-2}
	\end{pmatrix} }, \quad i=1,\dots, n_b,
\end{align}
where $e_j$ is the $j$-th coordinate vector in $\R^{d-2}$.
This way, each of the examples in the new dataset has norm exactly  
$1$, thereby satisfying our restriction. Denote each of these new examples 
by $\tilde x_{i'}$. Together, they yield a new objective $\loss$, the minimizer 
of which is given by 
\begin{align}
	w^* \coloneqq \begin{pmatrix}
		w^*_b \\ 
		\zerovec_{d-2}
	\end{pmatrix},
\end{align}
since the new loss being symmetric in the last $d-2$ coordinates.
Concretely, we can obtain the new solution $w^*$ 
by evaluating $\nabla \loss$,
\begin{align*}
	\nabla \loss(\tilde w) \big|_{\tilde w = w^*} &= \frac{1}{n} \sum_{i=1}^{n_b} -\ell'(- x_i\transpose w^*_b) \sum_{j=1}^{d-2} \paren{ \begin{pmatrix}
		x_i \\ s_i e_j 
	\end{pmatrix} - \begin{pmatrix}
		-x_i \\ s_i e_j
	\end{pmatrix} } \\
	&= \frac{1}{n_b} \sum_{i=1}^{n_b} -\ell'(-x_i\transpose w^*_b) \begin{pmatrix}
		x_i \\ \zerovec_{d-2}
	\end{pmatrix} \\
	&= \zerovec_d,
\end{align*}
Furthermore, the Hessian is given by 
\begin{align*}
	\nabla^2 \loss(\tilde w) &= \frac{1}{n} \sum_{i'=1}^{n} \ell''(-\tilde w\transpose \tilde x_{i'}) \tilde x_{i'}\tilde x_{i'}\transpose.
\end{align*}
Again evaluating at $w^*$, 
\begin{align}
	\nabla^2 \loss(\tilde w) \big|_{\tilde w = w^*} &= \frac{1}{n} \sum_{i=1}^{n_b} \ell''(-x_i\transpose w_b^*) \begin{pmatrix}
		2d(d-2) x_ix_i\transpose & \zerovec \\ 
		\zerovec & s_i^2 I_{(d-2)\times(d-2)} 
	\end{pmatrix} \nonumber \\
	&= \begin{pmatrix}
		\nabla^2 \loss_b(w^*) & \zerovec \\
		\zerovec & \frac{c_b}{d-2} \cdot I_{(d-2)\times (d-2)} 
	\end{pmatrix}, 
\end{align}
where 
\begin{align}
	c_b \coloneqq \frac{1}{n_b} \sum_{i=1}^{n_b} \ell''(-x_i\transpose w_b^*) s_i^2.
\end{align}
It is crucial that $c_b$ is a constant that only depends on the base dataset, 
and most importantly it does not depend on $d$. 
This block structure guarantees that 
$\lambda = \lambda_{\max}(\nabla^2\loss(w^*))$ will either equal to $\lambda_b$ 
or $c_b/(d-2)$, depending on which is larger. So to suppress the bottom right 
block, we can take $d$ to be large enough so that 
\begin{align}
	\label{eq:dimension-requirement}
	\lambda_b \geq \frac{c_b}{d-2}
\end{align}
to preserve the base curvature. This is certainly achievable with a finite $d$.

Therefore, we have shown that for all $\gamma<2$ 
we can construct a dataset on the $d$-dimensional unit sphere 
for some $d>1$ such that there exists a cyclic GD trajectory 
under the step size $\eta = \gamma/\lambda$. This cycle follows the same 
cycle as the base dataset (up to scaling) in the first two coordinates, 
and has zeros in all remaining coordinates. 

Since the cycle in the base dataset is stable (proven in 
\citet{meng2024gradient}), as long as we initialize close enough to the cycle 
in the first two dimensions and close enough to $0$ in the remaining 
$d-2$ dimensions, we can converge to it. The proof is now complete. 
\end{proof}

Numerically, we took a two-dimensional counterexample from 
\citet{meng2024gradient} and ``schmeared'' it onto a unit sphere in $d=5000$
dimension, ensuring \cref{eq:dimension-requirement} holds. 
We verified that GD indeed converges to a cycle using the prescribed 
step size for some $\gamma<2$ (that generated the cycle in the base 
dataset). 

\begin{figure}[H]
	\label{fig:counterexample-on-sphere-high-d}
	\centering \includegraphics[width=\textwidth]{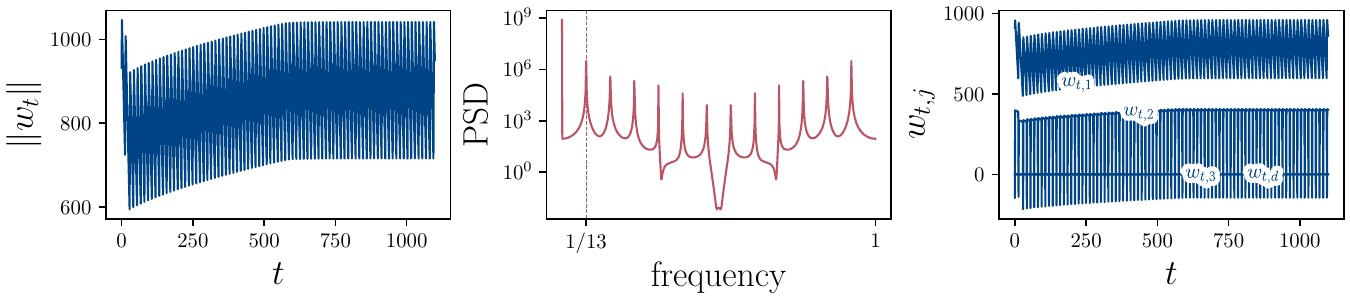}	
	\caption{ GD converges to a cycle on a $d=5000$ dimensional 
	dataset on the unit sphere for $\gamma=0.95$. We ran GD for a total 
	number of $T=20000$ iterations. The first panel shows the norms of the 
	iterates, and the last are the actual values of the iterates at a few 
	different coordinates. For these panels we only showed the first $1000$ 
	iterations so that convergence to the cycle can be viewed clearly. 
	The second panel shows the power spectrum computed using the last 
	$1024$ iterations of $\norm{w_t}$, indicating a period-$13$ cycle.}
\end{figure}

\section{Discussion}

In summary, we have shown that when restricting data to be 
on the sphere, global convergence is guaranteed for all $\gamma<2$ 
if $d=1$, but this is not true for any $\gamma$ for all $d>1$. 
While we have shown a counterexample in some $d>1$, 
we did not prove that it is true \emph{for all} $d>1$. Notably, 
this dimension may need to be fairly large, as $\lambda_b$. 
Thus one may ask: is global convergence still guaranteed 
for all $\gamma<2$ in lower dimensions? 

In this short paper, we have disproved the conjecture 
that having data on the unit sphere guarantees global convergence 
for all step sizes below the stability threshold 
$\eta = \gamma/\lambda$. 
This naturally leads back to our original question: 
How common are the counterexamples given in \citet{meng2024gradient}, 
and what are the sufficient conditions---if any---that ensure global convergence 
all the way up to the stability threshold? 
We tend to believe that if such conditions were to exist, they need to 
be restrictions on the dataset as a whole, rather than restrictions 
on the individual examples, as it is the geometry of the dataset 
that gives rise to the cycling dynamics. 
Addressing these questions would offer deeper insight into the behaviour 
of GD on logistic regression, as well as into its behaviour in 
broader deep learning classification tasks.

\vskip 0.2in

\acks{We would like to thank Frederik Kunstner
for helpful discussions and feedback on the manuscript.
A.O. acknowledges the financial support of the Hector Foundation.
C.D. was supported by the NSF CAREER Award (2046760). 
S.M. is supported by the European Union (ERC grant CASPER 101162889). 
The French government also partly funded this work under the management 
of Agence Nationale de la Recherche as part of the ``France 2030'' 
program, reference ANR-23-IACL-0008 (PR[AI]RIE-PSAI). 
Views and opinions expressed are however those of the authors only. 
This work was mostly done during S.M.'s visit at the 
ELLIS Institute Tübingen.}

\vskip 0.2in
\bibliography{refs}

\newpage
\appendix


\section{Missing proofs}
\label{sec:missing-proofs}

\gdpreservedunderscaling*
\begin{proof}
	The gradient of the scaled objective is given by 
	\begin{align*}
		\nabla \hat \loss(w) &= -\frac{1}{n} \hat X\transpose \sigma (-\hat X w) \\
		&= -c\frac{1}{n}X\transpose\sigma(-cXw)\\
		&= c\nabla \loss(cw).
	\end{align*}
	Since $\nabla \loss (w^*) = 0$, we have that 
	$\hat w^* = \frac{1}{c}w^*$ is the minimizer of $\hat \loss$. 
	Similarly, the new Hessian at the new solution is given by 
	\begin{align*}
		\nabla^2 \hat\loss(\hat w^*) &= \frac{1}{n}\hat X\transpose \Diag(\sigma'(-\hat X \hat w^*)) X \\
		&= \frac{c^2}{n} X\transpose \Diag( \sigma'( -Xw^* ) ) X \\
		&= c^2 \nabla^2\loss(w^*),
	\end{align*}
	and so $\hat\lambda = c^2\lambda$. Now
	take the GD update that generates the original trajectory 
	and divide it by $c$ on both sides,
	\begin{align*}
		\frac{1}{c}w_{t+1} &= \frac{1}{c} w_t - \frac{\gamma}{c\lambda} \nabla \loss(w_t) \\
		&= \frac{1}{c} w_t - \frac{\gamma}{\hat \lambda} c \nabla \loss \paren{ \frac{c}{c} w_t } \\
		&= \frac{1}{c} w_t - \frac{\gamma}{\hat \lambda} \nabla \hat \loss\paren{ \frac{1}{c} w_t }
	\end{align*}
	which completes the proof.
\end{proof}

The following three results are used in the proof of 
\cref{thm:1d-convergence-sphere} in \cref{sec:1d-convergence}.

\onestepcontraction*
\begin{proof}
	If $w\gg w^*$ in that $T(w) > w^*$, then contraction is 
	trivial as we simply moved closer. 
	When we do cross, $T(w) < w^*$, the GD update 
	can be expanded into 
	\begin{align*}
		w^* - T(w) &= w^* - w + \frac{\gamma}{\sigma'(w^*)} \paren{ \sigma(w) - \sigma(w^*) } \\
		\implies \frac{w^* - T(w)}{ w - w^*} &= -1 + \frac{\gamma}{\sigma'(w^*)} \frac{\sigma(w) - \sigma(w^*)}{ w - w^* } \\
		&< \frac{2}{\sigma'(w^*)} \frac{\sigma(w) - \sigma(w^*)}{ w - w^*} - 1 \tag{$\gamma < 2$}\\
		&\leq \frac{2}{\sigma'(w^*)} \max_{\bar w\in[w^*,w]} \sigma'(\bar w) - 1 \\ 
		&= \frac{2}{\sigma'(w^*)} \sigma'(w^*) - 1 \\
		&= 1,
	\end{align*}
	where we used the fact that $w > w^*$ and $\sigma'$ 
	is decreasing for all $w>0$. This gives us contraction whenever we take 
	a step on the right of $w^*$.
\end{proof}

\doublecrossing*
\begin{proof}
	The fact that we crossed over from right to left is equivalent to
	\begin{align}
		w_{t+1} - w^* =  w_t - w^* - \eta (\sigma(w_t) - \sigma(w^*)) &< 0 \nonumber \\
	\iff \eta R(w_t) &> 1 
	\end{align}
	where $R$ is the average average $\sigma'$ between $w$ and $w^*>0$, as in
	\begin{align*}
		R(w) \coloneqq \frac{\sigma(w) - \sigma(w^*)}{ w - w^*},
	\end{align*}
	Moreover, $R(w_{t+1}) > R(w_t)$. This can be proved by a case analysis:
	\begin{enumerate}
		\item If $w_{t+1} > -w^*$, then $R(w_{t+1}) > \sigma'(w^*)$
		as $R$ is the average of $\sigma'$ over $(-w^*, w^*)$, all of which 
		are greater than $\sigma'(w^*)$ due to $\sigma'$ being symmetric and 
		maximized at $0$. Furthermore, $\sigma'(w^*) > R(w_t)$ as 
		$w_t > w^*$ and $\sigma'$ is decreasing on both sides of $0$.
		\item By \cref{lemma:contraction-from-right},
		$w^* - w_{t+1} < w_t- w^*$, 
		implying $w_{t+1} > 2w^* - w_t$. 
		Since $R$ is strictly increasing on $(-\infty, -w^*)$ and 
		$w_{t+1} \leq -w^*$, we must have
		$R(w_{t+1}) > R(2w^* - w_t)$. Moreover, for any $w > w^* > 0$,
		it holds that $\sigma'(w) < \sigma'(2w^* - w)$, so it follows that 
		the average must also be higher 
		over $(2w^* - w, w^*)$ than $(w^*,w)$, giving us 
		$R(2w^* - w_t) > R(w_t)$. Chaining the inequalities 
		yields $R(w_{t+1}) > R(w_t)$ for this case.
	\end{enumerate}
	Together, we have 
	\begin{align*}
		w_{t+2} - w^* &= w_{t+1} - w^* - \eta (\sigma(w_{t+1}) - \sigma(w^*))  \\
		&= w_{t+1} - w^* - \eta R(w_{t+1}) (w_{t+1} - w^*) \\
		&> w_{t+1} - w^* - \eta R(w_t) (w_{t+1} - w^*) \\
		&> w_{t+1} - w^* - (w_{t+1} - w^*) \\ 
		&= 0
	\end{align*}
	and the proof is complete.
\end{proof}

\twostepcontraction*
\begin{proof}
	Observe that 
	\begin{align*}
		\frac{w_{t+2}-w^*}{w_t-w^*} &= \frac{w_{t+2}-w^*}{w_{t+1}-w^*} \frac{w_{t+1}-w^*} {w_t-w^*} \\
		&= \paren{1-\eta\frac{\sigma(w_{t+1}) - \sigma(w^*)}{w_{t+1} - w^*} } \frac{w_{t+1} - w^*}{ w_t-w^*} \\
		&= \paren{1-\eta\frac{\sigma(w_{t+1}) - \sigma(w^*)}{w_{t+1} - w^*} } \paren{ 1 - \eta \frac{\sigma(w_t) - \sigma(w^*)}{w_t - w^*} } \\
		&= (\eta R(w_{t+1}) - 1) (\eta R(w_t)- 1 ).
	\end{align*}
	For some $\epsilon>0$, we can write 
	the iterates as $w_{t+1} = w^* - \epsilon$, 
	which implies $w_t > w^* + \epsilon$ 
	by \cref{lemma:contraction-from-right}. Monotonicity of $R$ 
	on $w>0$ implies $R(w_t) < R(w^* + \epsilon)$. Using this in the above, 
	\begin{align}
		\label{eq:term-in-bracket}
		\hspace{-0.75em}\frac{w_{t+2}-w^*}{w_t-w^*} &\leq (\eta R(w^* - \epsilon) - 1)(\eta R(w^* + \epsilon) - 1) \nonumber \\
		&= 1 - \eta (R(w^* - \epsilon) + R(w^* + \epsilon)) + \eta^2 R(w^* - \epsilon)R(w^* + \epsilon) \nonumber \\
		&= 1 - \gamma \frac{ R(w^* - \epsilon) + R(w^* + \epsilon) }{ \sigma(w^*)(1-\sigma(w^*)) } + \gamma^2 \frac{ R(w^* - \epsilon) R(w^* + \epsilon) }{ ( \sigma(w^*)(1-\sigma(w^*)) )^2 } \nonumber \\
		&= 1 - \gamma \frac{ R(w^* - \epsilon) R(w^* + \epsilon) }{ (\sigma(w^*)(1-\sigma(w^*)))^2 } \brackets{ \frac{ \sigma(w^*)(1-\sigma(w^*)) }{ R(w^*-\epsilon) } + \frac{ \sigma(w^*)(1-\sigma(w^*)) }{ R(w^*+\epsilon) } - \gamma}.
	\end{align}
	It remains to show that the term in brackets is strictly positive. To this end, 
	\begin{align*}
		\sigma(w^* + \epsilon) - \sigma(w^*) &= \frac{ 1 }{ 1 + e^{-(w^* + \epsilon)} } - \frac{ 1 }{ 1 + e^{-w^*} } \\
		&= \frac{ (1 + e^{-w^*}) - (1 + e^{-(w^*+\epsilon)}) }{ (1 + e^{-w^*}) ( 1 + e^{-(w^*+\epsilon)}) } \\
		&= \frac{ e^{-w^*} -  e^{-(w^*+\epsilon)} }{ (1 + e^{-w^*}) ( 1 + e^{-(w^*+\epsilon)}) } \\
		&= \frac{ e^{-w^*} (1 - e^{-\epsilon} ) }{ (1 + e^{-w^*}) ( 1 + e^{-(w^*+\epsilon)}) }.
	\end{align*}
	Additionally, 
	\begin{align*}
		\frac{ 1 }{ R(w^*+\epsilon) } &= \frac{ \epsilon }{ \sigma(w^*+\epsilon) - \sigma(w^*) } \\
		&= \frac{ \epsilon (1 + e^{-w^*}) ( 1 + e^{-(w^*+\epsilon)})  }{  e^{-w^*} (1 - e^{-\epsilon} ) } \\
		&= \epsilon ( 1 + e^{w^*} ) \frac{ 1 + e^{-(w^*+\epsilon)} }{ 1 - e^{-\epsilon} }.
	\end{align*}
	Flipping the sign on $\epsilon$ gives us 
	\begin{align*}
		\frac{ 1 }{ R(w^*-\epsilon) } = -\epsilon (1+e^{w^*}) \frac{ 1 + e^{-(w^*-\epsilon)} }{ 1 - e^{\epsilon} }.
	\end{align*}
	Summing up the two, 
	\begin{align*}
		\frac{ 1 }{ R(w^*+\epsilon) } + \frac{ 1 }{ R(w^*-\epsilon) } &= \epsilon ( 1 + e^{w^*} ) \frac{ 1 + e^{-(w^*+\epsilon)} }{ 1 - e^{-\epsilon} } -\epsilon (1+e^{w^*}) \frac{ 1 + e^{-(w^*-\epsilon)} }{ 1 - e^{\epsilon} } \\
		&= \epsilon ( 1 + e^{w^*} ) \frac{ (1 + e^{-(w^*+\epsilon)})(1-e^{\epsilon}) - (1+e^{-(w^*-\epsilon)})(1-e^{-\epsilon}) }{ (1 - e^{-\epsilon})(1 - e^{\epsilon}) }.
	\end{align*}
	Multiplying out the numerator and re-arranging gives us 
	\begin{align*}
		\frac{ 1 }{ R(w^*+\epsilon) } + \frac{ 1 }{ R(w^*-\epsilon) } &= (1 + e^{w^*}) (1 + e^{-w^*}) \frac{\epsilon (e^{\epsilon} - e^{-\epsilon}) }{ (e^\epsilon - 1) ( 1-e^{-\epsilon}) }.
	\end{align*}
	Observe that $\sigma'(w^*) = \sigma(w^*)(1-\sigma(w^*)) = \sigma(w^*)\sigma(-w^*) = (1+e^{-w^*})^{-1} (1+e^{w^*})^{-1}$, 
	which lets us further simplify the above to just 
	\begin{align*}
		\frac{ \sigma(w^*)(1-\sigma(w^*)) }{ R(w^*+\epsilon) } + \frac{ \sigma(w^*)(1-\sigma(w^*)) }{ R(w^*-\epsilon) } &= \frac{\epsilon (e^{\epsilon} - e^{-\epsilon}) }{ (e^\epsilon - 1) ( 1-e^{-\epsilon}) } \\
		&= \frac{ \epsilon \sinh\epsilon  }{ \cosh\epsilon - 1 } \\
		&= \frac{\epsilon}{\tanh (\epsilon/2)}.
	\end{align*}
	Using the fact that for all $\epsilon>0$, $\tanh(\epsilon) < \epsilon$, 
	\begin{align*}
		\frac{ \sigma(w^*)(1-\sigma(w^*)) }{ R(w^*+\epsilon) } + \frac{ \sigma(w^*)(1-\sigma(w^*)) }{ R(w^*-\epsilon) } > 2.
	\end{align*}
	Putting this back into \cref{eq:term-in-bracket} and substitute back $w_{t+1} = w^* - \epsilon$, 
	\begin{align*}
		\frac{w_{t+2}-w^*}{w_t-w^*} &\leq 1 - \gamma \frac{ R(w^* - \epsilon) R(w^* + \epsilon) }{ (\sigma(w^*)(1-\sigma(w^*)))^2 } \brackets{ \frac{ \sigma(w^*)(1-\sigma(w^*)) }{ R(w^*-\epsilon) } + \frac{ \sigma(w^*)(1-\sigma(w^*)) }{ R(w^*+\epsilon) } - \gamma} \\
		&< 1 - \gamma \frac{ R(w^* - \epsilon) R(w^* + \epsilon) }{ (\sigma(w^*)(1-\sigma(w^*)))^2 } \brackets{ 2 - \gamma } \\
		&= 1 - \gamma (2-\gamma) \frac{ R(w_{t+1}) R(2w^* - w_{t+1}) }{ (\sigma(w^*)(1-\sigma(w^*)))^2 }.
	\end{align*}
	Using $R(w_{t+1}) > R(w_t)$ and $R(2w^*-w_t) > R(w_t)$ 
	from \cref{lemma:double-crossing-rlr}, 
	as well as the assumption that $\gamma<2$, 
	\begin{align*}
		\frac{w_{t+2}-w^*}{w_t-w^*} &\leq 1 - \gamma (2-\gamma) \frac{ R(w_{t})^2 }{ (\sigma(w^*)(1-\sigma(w^*)))^2 } \\
		&< 1,
	\end{align*}
	proving the right-to-left-to-right two-step contraction, 
	and thus convergence to $w^*$ for any initialization $w_0$.
\end{proof}

\end{document}